\documentclass[10pt,twocolumn]{article}
\usepackage[margin=0.75in]{geometry}
\usepackage{amsmath,amssymb,graphicx,booktabs}
\usepackage[hidelinks]{hyperref}
\usepackage{titlesec}
\titleformat{\section}{\normalfont\large\bfseries}{\thesection}{0.6em}{}
\titleformat{\subsection}{\normalfont\normalsize\bfseries}{\thesubsection}{0.5em}{}
\newtheorem{proposition}{Proposition}
\newtheorem{corollary}{Corollary}

\title{\vspace{-2em}\textbf{The Sparsity Ceiling: Where Spiking Networks Can---\\and Cannot---Trade Activity for Energy}}
\author{Zeyu Wang\\ \small Georgia Institute of Technology \\ \small \texttt{zwang3286@gatech.edu}}
\date{\today}

\begin{document}
\maketitle

\begin{abstract}
\noindent Spiking neural networks (SNNs) are promoted as an energy-efficient substrate because sparse,
event-driven activity replaces dense multiply--accumulates (MACs) with cheap accumulates (ACs). We argue
that the energy dividend of sparsity is \emph{not a property of SNNs but of the task}. Using a controlled
protocol---architecture held fixed, only the hidden unit swapped (continuous vs.\ leaky-integrate-and-fire),
plus a two-sided target-firing-rate probe---we measure how far activity can be pushed down before quality
breaks. On feed-forward perception, hidden firing is naturally low and drives precisely to a $5\%$ target with
no accuracy loss. On recurrent character-level language modeling the same probe, targeting $10\%$, cannot move
firing below $\sim\!50\%$: the recurrent state must stay active to carry information. A spiking
\emph{Transformer}, by contrast, sparsifies freely to $2\%$ firing at no extra quality cost (3 seeds) --- so
the ceiling is a property of \emph{recurrent compression}, not of sequence modeling. Attention escapes the floor
only by storing the full key--value cache, trading a firing floor for a memory wall: neither architecture
escapes cost on neuromorphic hardware, they pay on different axes. We formalize the ceiling with an
information-theoretic firing-floor bound
$\rho \ge H_b^{-1}(\log_2 M / H)$ and confirm its predictions on synthetic tasks:
the floor rises with memory load and falls with state width, and---refuting a naive memory-only reading---also
rises with task difficulty, exactly as a load-based bound requires. A layer-wise accounting exposes an \emph{input floor} that further caps op
reduction under dense (frame-replayed) input, isolating native event-driven input as the setting where
neuromorphic hardware wins. The results recast ``SNNs save energy'' as a task-conditional claim with a concrete,
measurable rule for where neuromorphic computation pays off.
\end{abstract}

\section{Introduction}
The standard case for neuromorphic computation is energetic: spikes are binary and sparse, so inference reduces
to accumulate (AC) operations gated by activity, avoiding the dense multiply--accumulates (MAC) of conventional
networks. This motivates a large literature reporting $10$--$100\times$ energy reductions and, increasingly,
spiking \emph{language} models~\cite{spikegpt,spikingbrain,sdt}.

We ask a narrower, falsifiable question: \emph{when a network is otherwise identical, how much energy does
spiking actually buy, and what determines that number?} Prior work overwhelmingly optimizes accuracy on a
single task and reports an energy estimate at whatever firing rate results; it rarely isolates the causal
variable---attainable sparsity---or asks whether that variable is controllable in a given task. Our
contributions:
\begin{itemize}\itemsep2pt
\item \textbf{Protocol.} A matched-architecture comparison changing only the neuron model, plus a two-sided
target-rate regularizer that \emph{probes} attainable sparsity rather than accepting the incidental one.
\item \textbf{The sparsity ceiling.} Low-load feed-forward perception is freely sparsifiable (to $\sim\!5\%$
firing); recurrent sequence modeling is not (firing pinned near $50$--$63\%$). Sparsity availability is
task-structural.
\item \textbf{Theory + consequence.} An information-theoretic firing-floor bound $\rho\ge H_b^{-1}(\log_2 M/H)$
explains the ceiling; controlled sweeps confirm the floor rises with memory load and falls with state width, and
(refuting a naive memory-only reading) also rises with task difficulty---the floor tracks representational load
$\log_2 M$, which memory blows up exponentially. A layer-wise \emph{input floor} shows dense-input replay caps
op reduction.
\item \textbf{Recurrence vs.\ attention: a cost dichotomy.} A spiking Transformer sparsifies freely to $2\%$
(no floor), localizing the ceiling to recurrent compression; attention pays instead in key--value memory. The
practical rule: neuromorphic wins on \emph{event-driven perception}; on sequence models it faces a firing floor
(recurrent) or a memory wall (attention).
\end{itemize}

\section{Method}
\textbf{Matched-architecture protocol.} For each task we instantiate one architecture and two variants
differing \emph{only} in the hidden nonlinearity: a conventional unit (ReLU for the convolutional perception
net; tanh for the recurrent LM) and a leaky-integrate-and-fire (LIF) spiking unit with surrogate-gradient
training. All weight shapes, depths, embeddings and readouts are identical, so any energy gap is attributable
to spiking alone.

\textbf{Attainable-sparsity probe.} We add a two-sided target-rate regularizer
$\mathcal{L}_{\mathrm{reg}}=\sum_{\ell}\lvert \bar{s}_\ell-\rho^\star\rvert$ with warmup, penalizing deviation
from a target rate $\rho^\star$ in \emph{both} directions. Unlike one-sided $L_1$ (which collapses the network
into a silent, unrecoverable state), this lets us \emph{request} a firing rate and observe whether the task
permits it.

\textbf{Energy proxy.} Per 45\,nm estimates~\cite{horowitz}, MAC $=4.6$\,pJ, AC $=0.9$\,pJ. Dense transforms
fed by continuous signals count as MACs; spike-driven transforms count as ACs scaled by measured presynaptic
firing and timesteps $T$. We separate the \emph{sparsity-driven} component from the fixed \emph{per-op}
(AC$<$MAC) component.

\section{The sparsity ceiling: a firing-floor bound}
Consider a spiking recurrent network with $H$ hidden units communicating by binary spikes
$s_t\in\{0,1\}^H$, whose recurrent and readout pathways depend on the state \emph{only through} $s_t$---the
operative regime on spike-based neuromorphic hardware. Let $\rho=\mathbb{E}[\overline{s_t}]$.

\begin{proposition}[Firing floor from memory load]
If the task requires distinguishing among $M$ memory contents per recall step (copying $N$ symbols from an
alphabet of size $S$ gives $M=S^N$), then, taking the branch $\rho\le 1/2$,
\[
\rho \;\ge\; H_b^{-1}\!\left(\tfrac{\log_2 M}{H}\right),\qquad
H_b(\rho)=-\rho\log_2\rho-(1-\rho)\log_2(1-\rho).
\]
Hence $\rho_{\min}$ is strictly increasing in the memory load $\log_2 M=N\log_2 S$, and $\rho_{\min}\to 0$ only
as $\log_2 M/H\to 0$.
\end{proposition}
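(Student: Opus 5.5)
The plan is an information-theoretic chain: task requirement $\Rightarrow$ mutual information through the spike vector $\Rightarrow$ entropy of a binary vector with a given mean activity $\Rightarrow$ invert the binary entropy. Because recurrent and readout pathways see the state only through $s_t$, the state at a recall step has the Markov structure $C \to s_t \to \hat{C}$, where $C$ is the memory content (uniform over $M$ values, e.g.\ $S^N$ copied strings) and $\hat{C}$ is the network's downstream decision. The first step is to formalize ``distinguishing among $M$ contents'' as error-free (or vanishing-error) recovery of $C$ from $s_t$. The data-processing inequality then gives $\log_2 M = H(C) = I(C;\hat C)\le I(C;s_t)\le H(s_t)$. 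If only small error $\epsilon$ is allowed, I would carry a Fano term $1+\epsilon\log_2 M$ and note that it vanishes in the stated idealization.

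The second step bounds $H(s_t)$ by the firing rate. By subadditivity, $H(s_t)\le\sum_{i=1}^H H_b(\rho_i)$, where $\rho_i=\Pr[s_{t,i}=1]$. Concavity of $H_b$ and Jensen give $\sum_i H_b(\rho_i)\le H\,H_b(\bar\rho)$ with $\bar\rho=\frac1H\sum_i\rho_i$. The quantity $\rho=\mathbb{E}[\overline{s_t}]$ should be read as this average at the recall step; if it is averaged over time as well, I would apply Jensen once more across steps, using the bound only at recall steps and noting that the time-averaged rate is at least the recall-step rate whenever that is the binding regime. If this is delicate, I would simply restrict $\rho$ to recall steps, as the statement ``per recall step'' suggests. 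Combining the two steps gives $H_b(\rho)\ge \log_2 M/H$.

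The third step is the inversion. On $[0,1/2]$, $H_b$ is a continuous strictly increasing bijection onto $[0,1]$, so on the branch $\rho\le 1/2$ we get $\rho\ge H_b^{-1}(\log_2 M/H)$. This requires $\log_2 M\le H$; otherwise no rate suffices and the task is infeasible, which I would state as a remark. The monotonicity claims then follow directly. $H_b^{-1}$ is strictly increasing on $[0,1]$, so $\rho_{\min}$ is strictly increasing in $\log_2 M=N\log_2 S$, and it decreases in $H$. Since $H_b^{-1}$ is continuous with $H_b^{-1}(0)=0$ and $H_b^{-1}(x)>0$ for $x>0$, we have $\rho_{\min}\to0$ iff $\log_2 M/H\to0$.

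The main obstacle is making the first step rigorous, namely specifying what ``requires distinguishing'' means and confirming that no information bypasses $s_t$. Membrane potentials hold analog state, but by hypothesis they influence recall only through future spikes, so I would condition on the full spike history. If the recall step's $s_t$ alone is not sufficient, I would instead bound $H(s_{t-k:t})$ over the window that feeds the readout and normalize by window length. That variant yields the same inequality with $H$ replaced by $H$ times the window length, and the stated form is the single-step case.
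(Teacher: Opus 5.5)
Your proof is correct. It reaches $H\,H_b(\rho)\ge\log_2 M$ by a different route than the paper's sketch.

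\textbf{The paper's route.} The sketch is purely combinatorial. It assumes every spike vector has at most $k=\rho H$ ones. It then counts such vectors, $\sum_{i\le k}\binom{H}{i}\le 2^{H H_b(\rho)}$, which is where $\rho\le 1/2$ is used. Finally it requires this count to be at least $M$, so that the $M$ contents get distinct codewords.

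\textbf{Your route.} You go through entropy instead:
\[
\log_2 M = H(C)\le H(s_t)\le\sum_i H_b(\rho_i)\le H\,H_b(\bar\rho),
\]
using data processing, subadditivity and concavity of $H_b$. You need $\rho\le 1/2$ only at the inversion.

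\textbf{What each buys.} The counting argument is distribution-free and needs no uniformity assumption on $C$. Strictly, though, it bounds the \emph{maximum} per-codeword firing fraction, a hard cap. That is not the mean $\rho=\mathbb{E}[\overline{s_t}]$ the proposition defines, and the definition does not supply the cap. Your argument bounds the \emph{mean} rate, which is what the proposition defines and what the experiments measure. Since the mean is at most the maximum, your bound implies the counting bound. It also extends with no extra work to stochastic spiking, heterogeneous per-neuron rates, and small decoding error via Fano. The price is that you need $C$ uniform. This holds for the copy task with i.i.d.\ symbols; otherwise $\log_2 M$ is replaced by $H(C)$.

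\textbf{One small caveat.} Your aside that the time-averaged rate is at least the recall-step rate is not justified if the average includes steps at which the content need not be decodable from the spikes. Jensen across time is valid only when the distinguishability requirement holds at every averaged step. Your fallback, reading $\rho$ per recall step as the statement does, is the right one.
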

\noindent\emph{Proof sketch.} The spike vector $s_t$ is the sole information channel out of the state. A binary
vector of length $H$ with at most $k=\rho H$ ones indexes at most $\sum_{i\le k}\binom{H}{i}\le 2^{H H_b(\rho)}$
messages ($\rho\le1/2$). Separating $M$ contents requires $2^{HH_b(\rho)}\ge M$, i.e. $HH_b(\rho)\ge\log_2 M$. \hfill$\square$

\begin{corollary}[Load, not memory, sets the floor]
If the task emits one decision among $C$ classes (not maintained across time), the constraint is
$HH_b(\rho)\ge\log_2 C$ at the readout \emph{only}, so $\rho$ can approach a far smaller minimum when $C\ll M$.
Memory is thus not privileged \emph{per se}: any source of representational load raises the floor. What makes
sequence modeling special is that memory makes $M{=}S^N$ grow \emph{exponentially} in the horizon, whereas
typical (low-class-count) perception carries a modest $M$ and sparsifies freely.
\end{corollary}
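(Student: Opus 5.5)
The plan is to reuse the counting argument behind the Proposition, but to apply it at a different place in time: once, at the readout step, instead of at every recall step. First I will make the hypothesis precise. The task emits a single decision $y\in\{1,\dots,C\}$, and the readout depends on the state only through the spike vector $s_t$ at the decision time. Nothing has to be carried forward, so no later step needs to recover anything from $s_t$ beyond $y$.

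The steps, in order:

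\begin{enumerate}
\item \textbf{Injectivity.} Correct classification forces the map from spike patterns to decisions to be onto $\{1,\dots,C\}$. Hence the set of spike patterns the network actually uses at the readout must contain at least $C$ distinct elements.
\item \textbf{Counting bound.} As in the Proposition's proof sketch, patterns with at most $k=\rho H$ ones, with $\rho\le 1/2$, number at most $\sum_{i\le k}\binom{H}{i}\le 2^{HH_b(\rho)}$. This yields $HH_b(\rho)\ge\log_2 C$, i.e.\ $\rho\ge H_b^{-1}(\log_2 C/H)$.
\item \textbf{Only the readout is constrained.} At earlier steps the same injectivity demand does not arise. The hidden units may then fire at whatever rate the input drive requires, which in feed-forward perception can be near zero. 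The average rate is therefore bounded only through the readout-time term. I will state this as: for any averaging window, the overall $\rho$ can be made as small as the readout constraint allows.
\item \textbf{Comparison.} Since $H_b^{-1}$ is strictly increasing on $[0,1/2]$, we get $H_b^{-1}(\log_2 C/H)<H_b^{-1}(\log_2 M/H)$ whenever $C<M$. Note that $H_b^{-1}(x)\approx x/\log_2(1/x)$ for small $x$. So when $C\ll M$, for instance when $\log_2 C=O(1)$ while $\log_2 M=N\log_2 S$, the gap is roughly linear in $N$. This formalizes ``far smaller minimum''.
\item \textbf{Interpretive part.} The argument used nothing about memory; it used only the number of messages $s_t$ must separate. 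Any source of load $L$ therefore gives the bound $HH_b(\rho)\ge L$. Memory matters only because $M=S^N$ makes $L=N\log_2 S$ linear in the horizon, that is, exponential in $M$.
\end{enumerate}

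The main obstacle is step 3. It is delicate to argue that the bound applies \emph{only} at the readout and not at the earlier steps. Doing so rigorously needs the absence of any maintenance requirement, and that absence is task-structural rather than derived from the model. I would handle it by stating it as part of the hypothesis rather than proving it.

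A second, related subtlety is that $\rho$ is an expectation, not a pointwise cap. I would follow the Proposition's convention and treat $k=\rho H$ as the effective active count, possibly via Markov's inequality on typical patterns, accepting the same looseness the Proposition already accepts.
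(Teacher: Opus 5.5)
Your proposal is correct and takes the same route as the paper. The paper gives no separate proof but intends exactly this: rerun the Proposition's counting bound once, at the single decision step, with $M$ replaced by $C$, then use monotonicity of $H_b^{-1}$ on $[0,1/2]$, treating ``not maintained across time'' as part of the task hypothesis just as you do in step 3. One small slip in step 5: $L=N\log_2 S$ being linear in the horizon means $M=S^N$ is exponential in the horizon; $L$ is logarithmic, not exponential, in $M$.
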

\S\ref{fig:dissoc} confirms both halves: a memoryless task's floor still rises with $C$, yet the exponential
memory term is what floors recurrent language modeling. \emph{Scope:} the bound assumes spike-only state communication; reading
the continuous membrane potential loosens it, but forfeits the spike-based communication that defines
neuromorphic hardware.

\section{Experiments}
Results are mean\,$\pm$\,std over 3 seeds unless noted.

\begin{figure}[t]
\centering
\includegraphics[width=\linewidth]{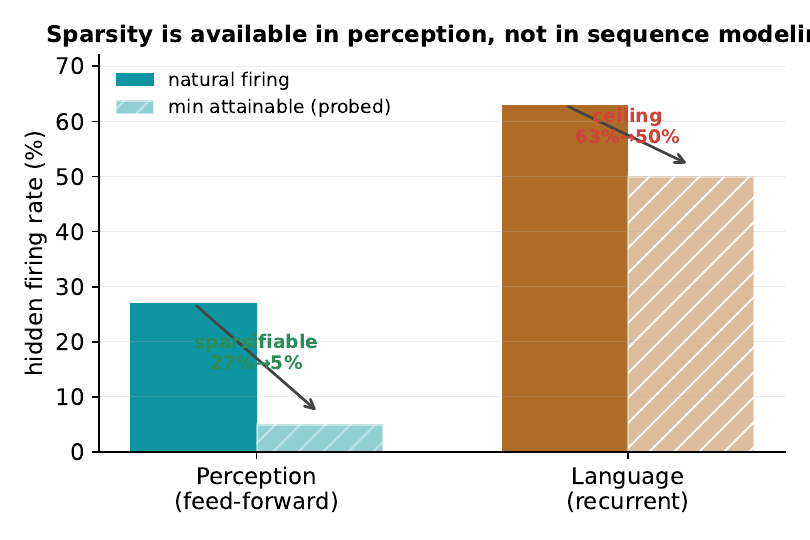}
\caption{The paper in one plot. The two-sided probe drives perception firing from $27\%$ to $5\%$ with no
accuracy loss, but cannot push recurrent language below $\sim\!50\%$: sparsity is available in feed-forward
perception and structurally unavailable in recurrent sequence modeling.}
\label{fig:contrast}
\end{figure}

\subsection{Perception is freely sparsifiable}
Convolutional SNN vs.\ matched CNN (rate-coded FashionMNIST as an event-stream stand-in; native event data is
future work). Firing drives precisely to the $5\%$ target with accuracy on par with the unregularized SNN;
one-sided $L_1$ collapses the network.
\begin{center}\small
\begin{tabular}{@{}lcccc@{}}
\toprule
config & firing & acc & ops/ANN & energy/ANN\\
\midrule
ANN (CNN) & --- & 88.1\% & $1.0\times$ & $1.0\times$\\
SNN, no reg & .16/.27 & 74.7\% & $0.52\times$ & $2.7\times$\\
SNN, $L_1$ & 0/0 & 10\% & --- & collapse\\
\textbf{SNN, $\rho^\star{=}.05$} & \textbf{.050/.051} & \textbf{76.6\,$\pm$.4} & $0.99\times$ & \textbf{5.05$\times$}\\
\bottomrule
\end{tabular}
\end{center}

\subsection{Sequence modeling hits a ceiling}
Spiking recurrent LM vs.\ architecturally identical tanh RNN (char-level WikiText-103). A strong penalty
targeting $10\%$ moves firing only to $0.50$ (from $0.63$), tight across seeds; the quality gap exceeds
perception's, and the SNN trains $\sim\!6\times$ slower on GPU.
\begin{center}\small
\begin{tabular}{@{}lccc@{}}
\toprule
config & firing & bpc & energy/ANN\\
\midrule
ANN (tanh RNN) & --- & 2.62 & $1.0\times$\\
SNN, no reg & $0.63\pm.03$ & $3.45\pm.03$ & $4.6\times$\\
\textbf{SNN, $\rho^\star{=}.10$} & $\mathbf{0.50\pm.01}$ & $3.48\pm.02$ & $5.1\times$\\
\bottomrule
\end{tabular}
\end{center}

\subsection{The firing floor rises with memory load}
On a synthetic copy task (ANN solves all, acc $1.00$), we trace accuracy vs.\ attained firing for memory loads
$N\in\{4,6,8\}$ (Fig.~\ref{fig:mech}). At $N{=}4$ the task is solved at the lowest probed firing ($\sim\!4\%$);
at $N{=}8$ accuracy collapses below $\sim\!8\%$ firing and only recovers there. The minimum firing that sustains
accuracy \textbf{rises} ($\sim\!4\%\!\to\!\sim\!8\%$), corroborating the bound. Accuracy is also
\emph{non-monotone} in firing---excess spikes hurt---indicating an optimal mid-low band.
\begin{figure}[t]
\centering
\includegraphics[width=\linewidth]{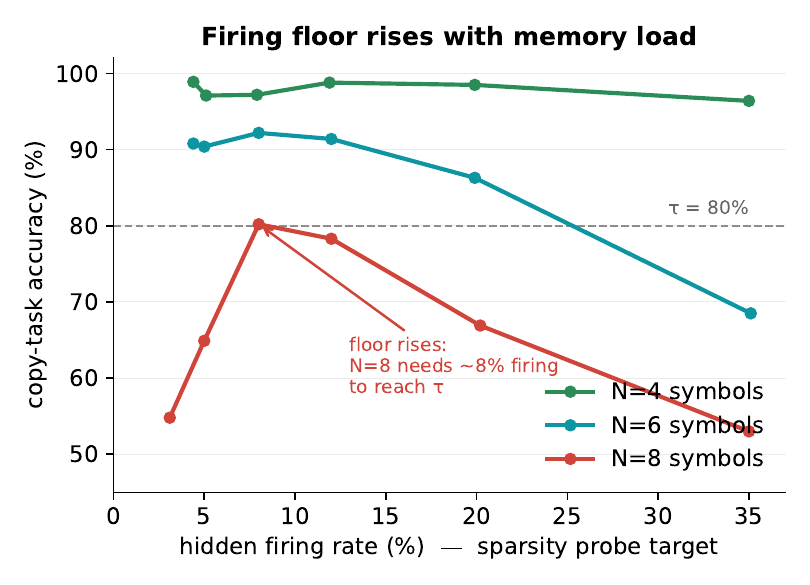}
\caption{Copy-task accuracy vs.\ hidden firing rate for three memory loads $N$. The firing needed to reach the
accuracy threshold rises with $N$: the sparsity floor tracks memory demand, as predicted by Prop.~1.}
\label{fig:mech}
\end{figure}

\subsection{The input floor}
Layer-wise op accounting at $\rho^\star{=}.05$ on perception: even with hidden layers at $5\%$ firing, the input
layer---a dense frame replayed over $T{=}10$ steps---is $61\%$ of the op budget, capping op reduction at parity
($0.99\times$). This isolates \emph{native event-driven input} (sparse by construction) as the missing
ingredient for a true op reduction, not merely an energy-per-op reduction.

\begin{figure}[t]
\centering
\includegraphics[width=\linewidth]{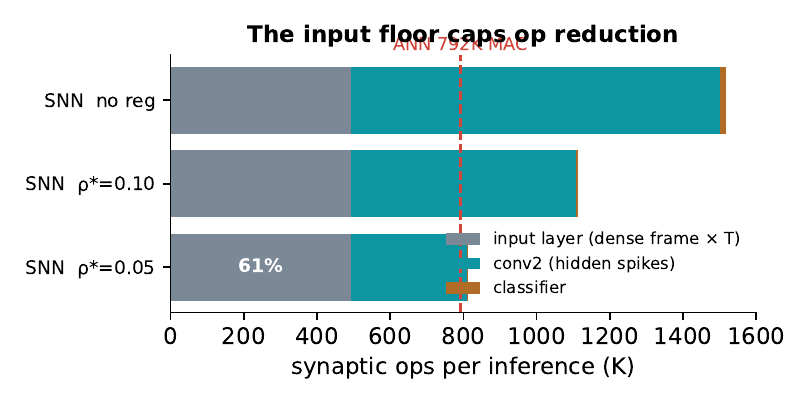}
\caption{Layer-wise synaptic-op budget. As hidden firing is driven down ($\rho^\star$: none$\to$0.10$\to$0.05),
the dense input layer (replayed over $T$) becomes the dominant cost ($61\%$ at $\rho^\star{=}0.05$), holding total
ops at parity with the ANN's MAC count. Native event-driven input would remove this floor.}
\label{fig:floor}
\end{figure}

\subsection{The floor scales with representational load ($N$, $H$, and $C$)}
The bound depends on $\log_2 M/H$, where $M$ is the number of messages the spike code must separate. Two
controlled sweeps test its structure.

\emph{Memory load and width (Fig.~\ref{fig:2d}).} On the copy task ($M{=}S^N$) we sweep memory load $N$ and
hidden width $H$ and read accuracy at a fixed $5\%$ firing budget. Accuracy falls as $N$ grows (the floor rises)
but is bought back by larger $H$ (at $N{=}8$, $H{=}128$ collapses to $48\%$ while $H{=}512$ holds $77\%$) --- both
directions match the $\log_2 M/H$ structure of the bound.

\emph{Difficulty (Fig.~\ref{fig:dissoc}).} We had conjectured the floor tracks \emph{memory} specifically. It does
not: a feed-forward classification with \emph{no} memory but growing class count $C$ also raises the floor
($9.5\%\!\to\!18\%\!\to\!26\%$ for $C{=}10,40,160$). This \emph{refutes} a clean memory-vs-difficulty dissociation
but \emph{confirms the bound's generality}: any source of representational load ($M{=}C$ here) raises $\rho_{\min}$.
The reason sequence modeling is uniquely floored is then sharp: memory makes $M{=}S^N$ grow \emph{exponentially}
in the horizon, whereas typical perception carries a modest $M$. The vision--language gap is memory's exponential
blow-up of $M$, not a categorical memory-only effect --- and ``perception is freely sparsifiable'' holds only at
\emph{low representational load}.

\begin{figure}[t]
\centering
\includegraphics[width=\linewidth]{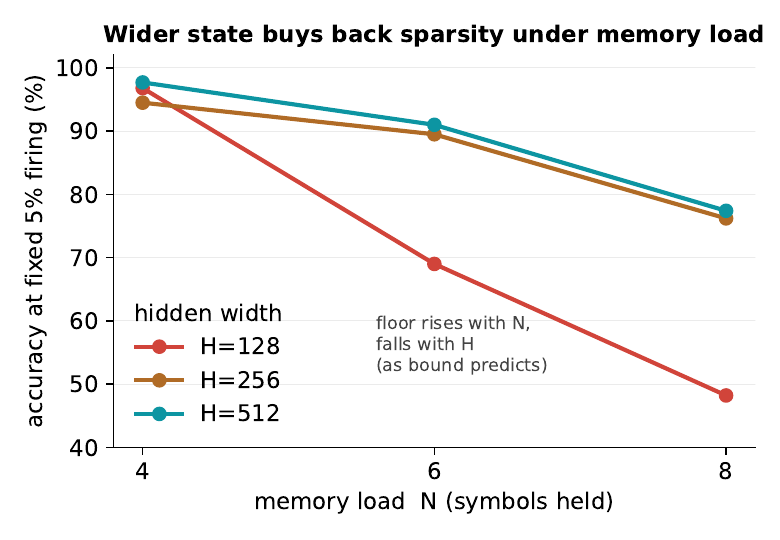}
\caption{Accuracy at a fixed $5\%$ firing budget vs.\ memory load $N$, for three hidden widths $H$. The sparsity
floor rises with $N$ and falls with $H$, as $\rho_{\min}\!\approx\!H_b^{-1}(\log_2 M/H)$ predicts.}
\label{fig:2d}
\end{figure}
\begin{figure}[t]
\centering
\includegraphics[width=\linewidth]{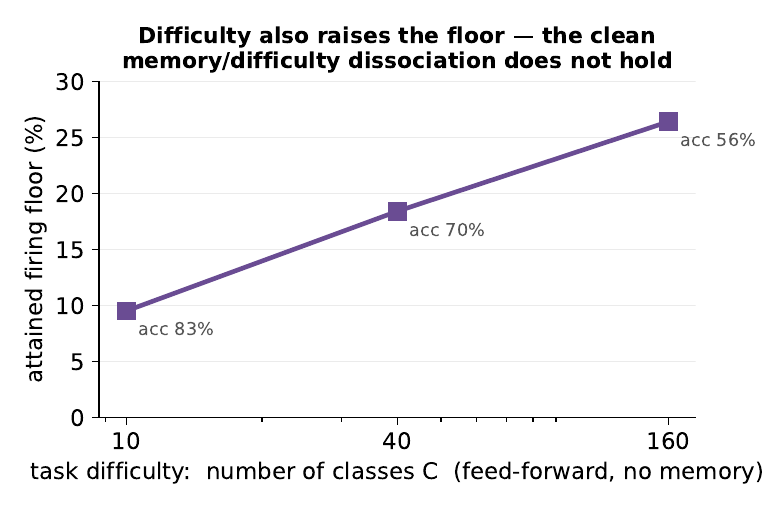}
\caption{A memoryless feed-forward task's firing floor still rises with difficulty $C$: representational load,
not memory alone, sets the floor---consistent with the bound ($M{=}C$).}
\label{fig:dissoc}
\end{figure}

\subsection{Attention sidesteps the ceiling---but pays a memory wall}
Is the ceiling intrinsic to sequence modeling, or to \emph{recurrence}? We build a spiking Transformer (spiking
attention-output and FFN neurons, integrated over $T_s{=}4$ spike-steps) and its matched ANN Transformer on the
same char-level task, and apply the probe (3 seeds).

\begin{center}\small
\begin{tabular}{@{}lcc@{}}
\toprule
config & firing & bpc\\
\midrule
ANN Transformer & --- & $3.05\pm.01$\\
SNN, no reg & $0.27\pm.01$ & $3.55\pm.01$\\
SNN, $\rho^\star{=}0.05$ & $0.050\pm.001$ & $3.57$\\
\textbf{SNN, $\rho^\star{=}0.02$} & $\mathbf{0.020\pm.000}$ & $3.57$\\
\bottomrule
\end{tabular}
\end{center}

\noindent Firing drops $14\times$ (0.27$\to$0.02) with \emph{no} bpc change: the spiking Transformer sparsifies
as freely as feed-forward perception, and shows \emph{none} of the recurrent floor (Fig.~\ref{fig:three}). The
ceiling is thus a property of \emph{recurrent compression}---forcing history through a fixed-width spike state
that must stay active---not of sequence modeling per se. Attention avoids it by keeping all tokens' key--value
entries (random access, no compression), i.e.\ it trades the firing floor for $O(\text{context})$ memory---the
on-chip memory wall that blocks large models on neuromorphic silicon~\cite{northpole}. On sequence tasks,
neuromorphic hardware therefore faces a dichotomy---firing floor (recurrent) or memory wall (attention)---and
escapes neither.

\begin{figure}[t]
\centering
\includegraphics[width=0.92\linewidth]{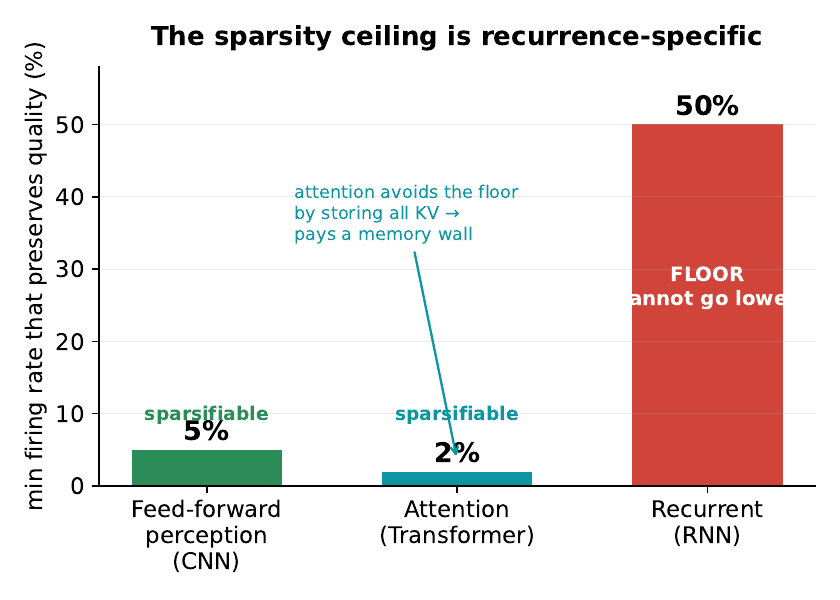}
\caption{Minimum firing that preserves quality, by architecture. Feed-forward perception ($5\%$) and attention
($2\%$) sparsify freely; the recurrent net is floored near $50\%$. Attention buys its sparsity with a KV-cache
memory wall.}
\label{fig:three}
\end{figure}

\section{Implications and related work}
\textbf{Where neuromorphic wins:} sparse, event-driven perception---low attainable firing and, with event
sensors, no input floor. \textbf{Where it does not:} sequence models face a dichotomy. A \emph{recurrent} SNN
compresses history into a fixed spike state and hits the firing floor (no sparsity dividend, only the fixed
AC$<$MAC constant that efficient ANNs also chase). An \emph{attention} SNN sparsifies freely but stores the full
key--value cache---$O(\text{context})$ memory---colliding with the on-chip-memory wall~\cite{northpole} that
already blocks LLM-scale weights on a neuromorphic chip. Either way the cost re-appears; it is not removed.
Consistently, the strongest ``spiking LLM'' results (SpikingBrain~\cite{spikingbrain}, 76B) run on GPU clusters
and spike Transformers~\cite{sdt} largely target vision. High firing is known to erase SNN
efficiency~\cite{sparsefiring,snnlp}; we add the controlled-probe cross-task view, a bound, and the
recurrence-vs-attention cost dichotomy.

\section{Limitations}
(1) Perception uses rate-coded FashionMNIST as an event-stream stand-in; native N-MNIST/DVS and a real
event-input accounting are needed. (2) The spiking Transformer is single-layer, char-level, small-scale; scaling
it (and quantifying the KV-cache memory cost directly) is the key next step. (3) The bound is single-step; a
tightening across $T$ and to task loss (not just distinguishability) is open. (4) Energy is a 45\,nm proxy;
measured Loihi\,2 / SpiNNaker2 energy would convert estimate to measurement. (5) The mechanism sweep uses a
narrow trainable window ($N\le 8$).

\section*{Code and reproducibility}
All scripts, per-run logs, and the energy-model constants are released; every result runs on a single GPU with
no custom hardware. \emph{Repository: \texttt{github.com/zeyuyuyu/sparsity-ceiling} (made public at submission).}

\end{document}